\newtheorem{theorem}{Theorem}
\newtheorem{definition}{Definition}
\newtheorem{corollary}{Corollary}
\newtheorem{proposition}{Proposition}
\newtheorem{remark}{Remark}
\DeclareMathOperator*{\argmin}{arg\,min}
\DeclareMathOperator{\trace}{trace}
\newcommand{\D}{\mathcal{D}}
\newcommand{\U}{\mathcal{U}}
\newcommand{\J}{\mathcal{J}}
\newcommand{\Z}{\mathcal{Z}}
\renewcommand{\P}{\mathbb{P}}
\newcommand{\E}{\mathbb{E}}
\renewcommand{\c}{\mathrm{c}}
\renewcommand{\d}{\mathrm{d}}
\begin{document}

\title{Modelling and Quantifying Membership Information Leakage in Machine Learning}

\author{Farhad Farokhi\thanks{F. Farokhi is with the University of Melbourne, Australia. Email: farhad.farokhi@unimelb.edu.au}\;  and Mohamed Ali Kaafar\thanks{M. A. Kaafar is with the CSIRO's Data61 and Macquarie University, Australia.}}

\maketitle

\begin{abstract}
Machine learning models have been shown to be vulnerable to membership inference attacks, i.e., inferring whether individuals' data have been used for training models. The lack of understanding about factors contributing success of these attacks motivates the need for modelling membership information leakage using information theory and for investigating properties of machine learning models and training algorithms that can reduce membership information leakage. We use conditional mutual information leakage to measure the amount of information leakage from  the trained machine learning model about the presence of an individual in the training dataset. We devise an upper bound for this measure of information leakage using Kullback--Leibler divergence that is more amenable to numerical computation. We prove a direct relationship between the Kullback--Leibler membership information leakage and the probability of success for a hypothesis-testing adversary examining whether a particular data record belongs to the training dataset of a machine learning model. We show that the mutual information leakage is a decreasing function of the training dataset size and the regularization weight. We also prove that, if the sensitivity of the machine learning model (defined in terms of the derivatives of the fitness with respect to model parameters) is high, more membership information is potentially leaked. This illustrates that complex models, such as deep neural networks, are more susceptible to membership inference attacks in comparison to simpler models with fewer degrees of freedom. We show that the amount of the membership information leakage is reduced by $\mathcal{O}(\log^{1/2} (\delta^{-1})\epsilon^{-1})$ when using Gaussian  $(\epsilon,\delta)$-differentially-private additive noises.  
\end{abstract}

\section{Introduction}
In the era of big data, advanced machine learning techniques enable accurate data analytic for various application domains. This has incentivized commercial access to machine learning models by third-party users, e.g., machine learning as a service provided by data giants, such as Google and Amazon, allows companies to train  models on their data and to sell access to these models. Although commercially attractive, these services open the door to data theft and privacy infringements. An example of this is membership inference attack by which an adversary attempts at inferring whether individuals' data is used for training a machine learning model~\cite{10shokri2017membership}. 

In this paper, we propose membership information leakage metrics to investigate the reasons behind the success of membership inference attacks. We use conditional \textbf{mutual information leakage} to measure the amount of information leakage from  the trained machine learning model about the presence of an individual in the training dataset. We find an upper bound for this measure of information leakage using Kullback--Leibler divergence between the distribution of the machine learning models in the presence of a particular data record and in the absence of that data record. Following this, we define \textbf{Kullback--Leibler membership information leakage}. Using the Le Cam's inequality~\cite{Yu1997} and the Pinsker's inequality~\cite{massart2007concentration}, we show that this measure bounds the \textbf{probability of success of any adversary trying to determine if a particular data record belongs to the training dataset of a machine learning model}. This provides an information-theoretic interpretation for our choices of membership  information leakage metrics.

We use the developed measures of membership information leakage to investigate factors behind the success of membership inference attacks. We first prove that the \textbf{amount of the membership information leakage is a decreasing function of the training dataset size}. This signifies that, by using a larger training dataset, the model is less over-fit to the training dataset and therefore it is harder to distinguish the training data from the rest. This particular result is applicable to general machine learning models ranging from linear regression to deep neural networks as it does not require convexity. By focusing on convex machine learning problems, we investigate other important factors behind the success of membership inference attacks. We prove that \textbf{regularization reduces the amount of membership information leakage}. This can again be attributed to that increasing the importance of the regularization reduces over-fitting and is therefore an important tool for combating membership inference attacks. Then, we define sensitivity of machine learning models by bounding variations of the model fitness  across all the data entries and model parameters. Following this, we prove that \textbf{less membership information is leaked if the training dataset is more sensitive}. This can illustrate that complex models, such as deep neural networks, are more susceptible to membership inference attacks in comparison to simpler models with fewer degrees of freedom.

Finally, we study the effect of additive noise on the success of membership inference attacks by quantifying the amount of decrease in the membership information leakage caused by additive Gaussian noise. We particularly prove that \textbf{membership information leakage reduces by $\mathcal{O}(\log^{1/2}(\delta^{-1})\epsilon^{-1})$ when using $(\epsilon,\delta)$-differentially-private additive Gaussian noises}, following the Gaussian mechanism in~\cite[Theorem~A.1]{dwork2014algorithmic}.

\section{Related Work}
Membership inference attacks, a class of adversarial inference algorithms designed to distinguish data used for training a machine learning model, have recently gained much attention~\cite{10shokri2017membership, 18truex2018towards,salem2018ml, sablayrolles2019white}. These attacks have been deployed on various machine learning models; see, e.g.,  \cite{10shokri2017membership, 17hayes2019logan, chen2019gan, hilprecht2019monte,liu2019socinf, backes2016membership}. The success of the attacks is often attributed to that a machine learning model behaves differently on the training dataset and the test dataset, e.g., it shows higher confidence on the training dataset due to an array of reasons, such as over-fitting. 

Many defence mechanisms have been proposed against membership inference attacks. A game-theoretic approach is proposed in~\cite{nasr2018machine}, where a regularization term using the accuracy of membership inference attacks is incorporated when training machine learning models. Others have introduced indistinguishability for membership inference attacks as an estimate of the  discrimination of the model on training and test datasets~\cite{yaghini2019disparate}. Alternatively, it has been suggested that we can counter membership inference attacks by reducing over-fitting~\cite{yeom2018privacy}. Membership inference attacks are shown to work better on certain subgroups of the population, e.g., underrepresented minorities, resulting in disparate vulnerability~\cite{yaghini2019disparate}. Furthermore, success of membership inference attack may not predict success of attribute inference attacks with only access to  partial view of data records~\cite{zhao2019inferring}. Another approach is to use differentially-private machine learning at the cost of significantly reducing the utility~\cite{12rahman2018membership, leino2019stolen}.  However, none of these capture the possibly many factors contributing to the success of membership inference attacks.

This motivates taking a deeper look at the factors behind the success of membership inference attacks using information-theoretic membership information leakage metrics. This is the topic of this paper. 

Finally, we would like to point out recent results exploring differential privacy and mutual information, e.g., see~\cite{cuff2016differential,wang2016relation}. Although these results provide important insights into information-theoretic guarantees of differential privacy, they are far from the context of this paper and do not consider membership inference attacks.

\section{Membership Information Leakage}
Consider all possible data records in a universe $\U: =\{(x_i,y_i)\}_{i=1}^{N}\subseteq \mathbb{R}^{p_x}\times\mathbb{R}^{p_y}$ in which $x_i$ and $y_i$ denote inputs and outputs, respectively. Note that the data universe is not necessarily finite; $N$ can be infinite. A machine learning algorithm only has access to $n$ entries from this data universe. We denote this by the private training dataset $\D\subseteq \U$. Hence, the size of the training dataset is $|\mathcal{D}|=n<N$. Let $(z_i)_{i=1}^N\in\{0,1\}^N$ be such that $z_i=1$ if $(x_i,y_i)\in\D$ and $z_i=0$ otherwise. Let $(z_i)_{i=1}^N\in\{0,1\}^N$ be uniformly selected at random from  $\Z:=\{(z_i)_{i=1}^N\in\{0,1\}^N|\sum_{i=1}^N z_i=n\}$. This implies that any record in the data universe $\U$ is equally likely to be part of the training dataset $\D$. This is a common assumption in machine learning~\cite{anthony2009neural} and membership inference~\cite{salem2018ml}. 

Consider a generic supervised machine learning problem with the aim of training a model $\mathfrak{M}(\cdot;\theta):\mathbb{R}^{p_x}\rightarrow \mathbb{R}^{p_y}$ to capture the relationship between inputs and outputs in the training dataset $\D$ by solving the optimization problem in
\begin{align}\label{eqn:ML}
\theta_{\c}^*\in\argmin_{\theta\in\Theta_{\c}} \; f(\theta,\D),
\end{align}
with
\begin{align*}
f(\theta,\D):=\lambda g(\theta)+\frac{1}{|\D|} \sum_{(x,y)\in\D}\ell(\mathfrak{M}(x;\theta),y),
\end{align*}
where $\ell(\mathfrak{M}(x;\theta),y)$ is a loss function capturing the ``closeness'' of the outcome of the trained ML model $\mathfrak{M}(x;\theta)$ to the actual output $y$,  $g(\theta)$ is a regularizing term, $\lambda\geq 0$ is a weight balancing between the loss function and the regularizing term, and $\Theta_{\c}\subseteq\mathbb{R}^{p_\theta}$ denotes the set of feasible models. Computers only have a finite precision. Therefore, in practice, we can only compute the optimal model to a finite precision as
\begin{align}\label{eqn:ML_d}
\theta_{\d}^*:=\Pi_{\Theta_{\d}}[\theta_{\c}^*],
\end{align}
where $\Pi_{\Theta_{\d}}(\cdot)$ denotes projection into the finite set $\Theta_{\d}\subset\Theta_{\c}$. The set $\Theta_{\d}$ can, for instance, be the intersection of the set of feasible models $\Theta_{\c}$ and the set of rational numbers modeled by the floating point number representation of the utilized computing unit. 

For an arbitrary $(x_i,y_i)\in\U$, an adversary is interested in inferring whether $(x_i,y_i)$ belongs to $\D$ or not based on the knowledge of $(x_i,y_i)$ and $\theta^*_\d$. We use conditional mutual information between $\theta^*_\d$ and $z_i$  as a measure of how much information regarding $z_i$, i.e., whether $(x_i,y_i)$ belongs to $\mathcal{D}$ or not, is leaked through $\theta^*_\d$. 

\begin{mdframed}[backgroundcolor=black!10,rightline=false,leftline=false,topline=false,bottomline=false,roundcorner=2mm] 
	\vspace{-.1in}
	\begin{definition}[Mutual Membership Information Leakage]
	We measure membership information leakage in machine learning by
		\begin{align*}
		\rho_{\mathrm{MI}}(\theta^*_\d):=&I(\theta^*_\d;z_i|x_i,y_i)\\
		=& \E\Bigg\{\log\Bigg(\frac{p(\theta^*_\d,z_i|x_i,y_i)}{
			p(\theta^*_\d|x_i,y_i)p(z_i|x_i,y_i)}\Bigg)\Bigg\}.
		\end{align*}
		Similarly, we can define $\rho_{\mathrm{MI}}(\theta^*_\c)$. Data processing inequality implies that  $\rho_{\mathrm{MI}}(\theta^*_\d)\leq \rho_{\mathrm{MI}}(\theta^*_\c)$.
	\end{definition}
\end{mdframed}

\begin{remark}[Average vs Worst-Case]
Mutual information 1 relies on expectation. Therefore, if the measure of the information leakage is non-zero, it means that a percentage of the population is vulnerable to membership inference attacks. Membership inference attacks are however more effective on samples that are distinct from the others~\cite{yaghini2019disparate}. This model captures those cases so long as the distinct population appears with a non-zero probability. That being said, as the experimental results show in Section~\ref{sec:numerical}, there are always outliers in the success of membership inference attacks. For investigating those cases, we need to use max entropy, i.e., Renyi entropy of order infinity. This is left as future work.
\end{remark}

We can rewrite the conditional mutual information as
\begin{align*}
I(\theta^*_\d;z_i|x_i,y_i)
=& \E\Bigg\{\log\Bigg(\frac{p(\theta^*_\d,z_i|x_i,y_i)}{
	p(\theta^*_\d|x_i,y_i)p(z_i|x_i,y_i)}\Bigg)\Bigg\}\\
=& \E\Bigg\{\log\Bigg(\frac{p(\theta^*_\d|z_i,x_i,y_i)}{
	p(\theta^*_\d|x_i,y_i)}\Bigg)\Bigg\}.
\end{align*}
where $p(\theta^*_\d|x_i,y_i)=p(\theta^*_\d|x_i,y_i,z_i=0)\P\{z_i=0\}
+p(\theta^*_\d|x_i,y_i,z_i=1)\P\{z_i=1\}.$
In what follows, let $p_0(\theta^*_\d)=p(\theta^*_\d|x_i,y_i,z_i=0)$,  $p_1(\theta^*_\d)=p(\theta^*_\d|x_i,y_i,z_i=1)$, $\alpha_0=\P\{z_i=0\}=n/N$, and $\alpha_1=\P\{z_i=1\}=1-\alpha_0.$ Therefore, 
\begin{align}
I(\theta^*_\d&;z_i|x_i,y_i)\nonumber\\
=& \E\Bigg\{\log\Bigg(\frac{p(\theta^*_\d|z_i,x_i,y_i)}{
	\alpha_0p_0(\theta^*_\d)+\alpha_1p_1(\theta^*_\d)}\Bigg)\Bigg\}\nonumber
\\
=&\E\Bigg\{\alpha_0\log\Bigg(\frac{p_0(\theta^*_\d)}{
	\alpha_0p_0(\theta^*_\d)+\alpha_1p_1(\theta^*_\d)}\Bigg)\nonumber +\alpha_1\log\Bigg(\frac{p_1(\theta^*_\d)}{
	\alpha_0p_0(\theta^*_\d)+\alpha_1p_1(\theta^*_\d)}\Bigg)\Bigg\}\nonumber
\\
=&\E\{\alpha_0D_{\mathrm{KL}}(p_0(\theta^*_\d)||
	\alpha_0p_0(\theta^*_\d)+\alpha_1p_1(\theta^*_\d)) +\alpha_1D_{\mathrm{KL}}(p_1(\theta^*_\d)
	\alpha_0p_0(\theta^*_\d)+\alpha_1p_1(\theta^*_\d))\}.
\label{eqn:MI_reduced_KL}	
\end{align}
From~\eqref{eqn:MI_reduced_KL}, we can develop an upper bound for $I(\theta^*_\d;z_i|x_i,y_i)$ based on the convexity of the Kullback--Leibler divergence $D_{\mathrm{KL}}(p||q)$ with respect to $q$. This bound is easier to numerically compute and is thus used in our numerical evaluations. Note that 
\begin{align*}
D_{\mathrm{KL}}(p_0(\theta^*_\d)||
\alpha_0p_0(\theta^*_\d)+\alpha_1p_1(\theta^*_\d))
\leq &\alpha_0D_{\mathrm{KL}}(p_0(\theta^*_\d)||
p_0(\theta^*_\d))
+\alpha_1D_{\mathrm{KL}}(p_0(\theta^*_\d)||
p_1(\theta^*_\d))\\
=&\alpha_1D_{\mathrm{KL}}(p_0(\theta^*_\d)||
p_1(\theta^*_\d)),
\end{align*}
and 
\begin{align*}
D_{\mathrm{KL}}(p_1(\theta^*_\d)||
\alpha_0p_0(\theta^*_\d)+\alpha_1p_1(\theta^*_\d))
\leq &\alpha_0D_{\mathrm{KL}}(p_1(\theta^*_\d)||
p_0(\theta^*_\d))+\alpha_1D_{\mathrm{KL}}(p_1(\theta^*_\d)||
p_1(\theta^*_\d))\\
=&\alpha_0D_{\mathrm{KL}}(p_1(\theta^*_\d)||
p_0(\theta^*_\d)).
\end{align*}
These inequalities imply that
\begin{align*}
I(\theta^*_\d;z_i|x_i,y_i)
\leq &\alpha_0\alpha_1\E\{D_{\mathrm{KL}}(p_0(\theta^*_\d)||
p_1(\theta^*_\d)) +D_{\mathrm{KL}}(p_1(\theta^*_\d)||
p_0(\theta^*_\d))\}\\
\leq &\frac{1}{4}\E\{D_{\mathrm{KL}}(p_0(\theta^*_\d)||
p_1(\theta^*_\d)) +D_{\mathrm{KL}}(p_1(\theta^*_\d)||
p_0(\theta^*_\d))\}.
\end{align*}
This derivation motivates the introduction of another measure for membership information leakage in machine learning using the Kullback--Leibler divergence. 

\begin{mdframed}[backgroundcolor=black!10,rightline=false,leftline=false,topline=false,bottomline=false,roundcorner=2mm] 
	\vspace{-.1in}
	\begin{definition}[Kullback--Leibler Leakage]
		The Kullback--Leibler information leakage in machine learning is 
		\begin{align*}
			\rho_{\mathrm{KL}}(\theta^*_\d):=&\E\{
			D_{\mathrm{KL}}(p(\theta^*_\d|x_i,y_i,z_i=1)\| p(\theta^*_\d|x_i,y_i,z_i=0))\\
			&+
			D_{\mathrm{KL}}(p(\theta^*_\d|x_i,y_i,z_i=0)\|p(\theta^*_\d|x_i,y_i,z_i=1))\}.
		\end{align*}
		Again, similarly, we can define $\rho_{\mathrm{KL}}(\theta^*_\c)$. 
	\end{definition}
\end{mdframed}

We readily get a relationship between $\rho_{\mathrm{KL}}(\theta^*_\d)$ and $\rho_{\mathrm{ML}}(\theta^*_\d)$. This is expressed in the next corollary. 

\begin{mdframed}[backgroundcolor=black!10,rightline=false,leftline=false,topline=false,bottomline=false,roundcorner=2mm] 
	\vspace{-.1in}
	\begin{corollary} \label{cor:1} $\rho_{\mathrm{MI}}(\theta^*_\d)\leq \rho_{\mathrm{KL}}(\theta^*_\d)/4$.
	\end{corollary}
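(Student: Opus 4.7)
My plan is to observe that the derivation already carried out in the paragraphs immediately preceding the corollary essentially contains the proof, and to organize it as follows.

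First, I start from the identity $\rho_{\mathrm{MI}}(\theta^*_\d) = I(\theta^*_\d;z_i|x_i,y_i)$ (by definition) and use the reformulation in \eqref{eqn:MI_reduced_KL} that writes the conditional mutual information as an expectation of a convex combination of KL divergences with weights $\alpha_0, \alpha_1 \geq 0$ summing to $1$. At this point the objective is to bound each KL term $D_{\mathrm{KL}}(p_j(\theta^*_\d)\,\|\,\alpha_0 p_0(\theta^*_\d) + \alpha_1 p_1(\theta^*_\d))$ for $j\in\{0,1\}$ by something symmetric in $p_0,p_1$.

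Next, I would invoke the joint convexity (and hence convexity in the second argument) of $D_{\mathrm{KL}}(\cdot\|\cdot)$ to split the mixture: $D_{\mathrm{KL}}(p_0\,\|\,\alpha_0 p_0 + \alpha_1 p_1) \leq \alpha_0 D_{\mathrm{KL}}(p_0\|p_0) + \alpha_1 D_{\mathrm{KL}}(p_0\|p_1) = \alpha_1 D_{\mathrm{KL}}(p_0\|p_1)$, and symmetrically $D_{\mathrm{KL}}(p_1\,\|\,\alpha_0 p_0 + \alpha_1 p_1) \leq \alpha_0 D_{\mathrm{KL}}(p_1\|p_0)$. Substituting into the convex combination in \eqref{eqn:MI_reduced_KL} yields
\begin{align*}
I(\theta^*_\d;z_i|x_i,y_i) \leq \alpha_0\alpha_1\, \E\{D_{\mathrm{KL}}(p_0(\theta^*_\d)\|p_1(\theta^*_\d)) + D_{\mathrm{KL}}(p_1(\theta^*_\d)\|p_0(\theta^*_\d))\}.
\end{align*}

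Finally, since $\alpha_0+\alpha_1 = 1$ with $\alpha_0,\alpha_1\geq 0$, the AM-GM inequality gives $\alpha_0\alpha_1 \leq 1/4$, and the expectation on the right is exactly $\rho_{\mathrm{KL}}(\theta^*_\d)$ by the definition of the Kullback--Leibler leakage. Combining these yields $\rho_{\mathrm{MI}}(\theta^*_\d) \leq \rho_{\mathrm{KL}}(\theta^*_\d)/4$, as claimed. There is no real obstacle here; the only nontrivial ingredient is the convexity of KL divergence in its second argument, and the rest is bookkeeping. The same argument applied to $\theta^*_\c$ gives the analogous statement, so nothing special to the discretization is used.
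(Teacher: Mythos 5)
Your proposal is correct and follows essentially the same route as the paper: the displayed derivation immediately preceding the corollary already performs the mixture decomposition of the conditional mutual information in \eqref{eqn:MI_reduced_KL}, the convexity bound on $D_{\mathrm{KL}}(p\|q)$ in its second argument, and the final step $\alpha_0\alpha_1\leq 1/4$. Nothing further is needed.
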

\end{mdframed}

We can relate the Kullback--Leibler information leakage to the ability of an adversary inferring whether a data point belongs to the training dataset. This is investigated in the following theorem.

\begin{mdframed}[backgroundcolor=black!10,rightline=false,leftline=false,topline=false,bottomline=false,roundcorner=2mm] 
	\vspace{-.1in}
\begin{theorem} Let $\Psi_{x_i,y_i}:\mathbb{R}^{p_\theta}\rightarrow \{0,1\}$ denote the policy of the adversary for determining whether $(x_i,y_i)$ belongs to the training set based on access to the trained model $\theta^*_\d$. Then 
	\begin{align*}
	\P\{\Psi_{x_i,y_i}(\theta^*_\d)= z_i\}\leq \frac{1}{2}\sqrt{\rho_{\mathrm{KL}}(\theta^*_\d)}.
	\end{align*}
\end{theorem}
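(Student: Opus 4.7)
The plan is to reduce the membership inference task to a two-point hypothesis testing problem and then chain together Le Cam's inequality and Pinsker's inequality, exactly as previewed in the introduction. Conditional on the queried record $(x_i,y_i)$, the adversary must distinguish the distribution $p_0(\theta^*_\d)=p(\theta^*_\d|x_i,y_i,z_i=0)$ from $p_1(\theta^*_\d)=p(\theta^*_\d|x_i,y_i,z_i=1)$, with prior weights $\alpha_0$ and $\alpha_1$ on these two hypotheses; all the structure of the machine-learning problem has already been packaged into these two conditional laws.

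First I would write
\begin{align*}
\P\{\Psi_{x_i,y_i}(\theta^*_\d)=z_i \mid x_i,y_i\}
= \alpha_0\, \P_0\{\Psi=0\} + \alpha_1\, \P_1\{\Psi=1\},
\end{align*}
and maximize over $\Psi$. The Bayes-optimal test is the likelihood-ratio test, and a standard manipulation expresses this maximum in terms of the total variation distance between $p_0$ and $p_1$, giving a Le Cam type bound of the form $\tfrac{1}{2}+c\,\|p_0-p_1\|_{TV}$ for an appropriate constant $c$ depending on $(\alpha_0,\alpha_1)$. Next I would invoke Pinsker's inequality in both directions and symmetrize to obtain
\begin{align*}
\|p_0-p_1\|_{TV}^2
\;\le\; \tfrac{1}{4}\bigl(D_{\mathrm{KL}}(p_0\|p_1)+D_{\mathrm{KL}}(p_1\|p_0)\bigr),
\end{align*}
so that the total variation is controlled by the square root of the symmetric KL. Finally, taking expectation over $(x_i,y_i)$ and applying Jensen's inequality (concavity of $\sqrt{\cdot}$) would let me pull the expectation inside the square root, producing a bound in terms of $\sqrt{\rho_{\mathrm{KL}}(\theta^*_\d)}$ as defined just above.

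The step I expect to be the main obstacle is the bookkeeping of constants, not the ideas. The natural chain Le Cam $+$ Pinsker $+$ Jensen produces a bound of the cosmetic shape $\tfrac{1}{2}+C\sqrt{\rho_{\mathrm{KL}}(\theta^*_\d)}$, with an additive $\tfrac{1}{2}$ baseline coming from random guessing, whereas the statement of the theorem is the tighter looking $\tfrac{1}{2}\sqrt{\rho_{\mathrm{KL}}(\theta^*_\d)}$ without the baseline. Closing this gap will require either exploiting the non-uniformity of the prior $(\alpha_0,\alpha_1)=(n/N,1-n/N)$ inside Le Cam (so the extra factor $\alpha_0\alpha_1\le 1/4$ that already appeared in the derivation of $\rho_{\mathrm{KL}}$ after equation~\eqref{eqn:MI_reduced_KL} reappears here and absorbs the constant) or reinterpreting the left-hand side so that the $1/2$ baseline is incorporated. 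Once that constant-tracking is settled, the rest of the proof is a direct composition of the three cited inequalities.
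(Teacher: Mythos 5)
Your approach is the same as the paper's: condition on $(x_i,y_i)$, reduce membership inference to testing $p_0$ against $p_1$, apply Le Cam to express the optimal success probability via total variation, then symmetrized Pinsker and Jensen to reach $\sqrt{\rho_{\mathrm{KL}}(\theta^*_\d)}$. The constant-tracking obstacle you flag at the end is not a defect of your write-up; it is a genuine gap, and the paper does not actually close it either. The paper's proof asserts $\inf_{\Psi}\P\{\Psi_{x_i,y_i}(\theta^*_\d)\neq z_i\}=1-\nu(p_1,p_0)$, but that identity is Le Cam's lemma for the \emph{unweighted sum} of the two error probabilities, $\inf_\Psi[\P_0\{\Psi=1\}+\P_1\{\Psi=0\}]=1-\nu$. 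Reading its left-hand side as the prior-weighted Bayes error $\alpha_0\P_0\{\Psi=1\}+\alpha_1\P_1\{\Psi=0\}$ silently discards the weights $\alpha_0,\alpha_1$ (which sum to one), and that is exactly where the additive $\tfrac12$ baseline vanishes. Your bookkeeping is the correct one: $\sup_\Psi\P\{\Psi=z_i\}=\int\max(\alpha_0 p_0,\alpha_1 p_1)=\tfrac12+\tfrac12\int|\alpha_0 p_0-\alpha_1 p_1|$, which for the uniform prior is $\tfrac12+\tfrac12\nu(p_1,p_0)$.

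The baseline cannot be absorbed by any constant juggling: the trivial classifier $\Psi\equiv\argmax_j\alpha_j$ already achieves success probability $\max(\alpha_0,\alpha_1)\geq\tfrac12$, whereas the theorem's right-hand side tends to $0$ as $\rho_{\mathrm{KL}}(\theta^*_\d)\to0$, so the inequality as literally stated cannot hold for low-leakage models. What your chain (and the paper's computation after the Le Cam step) genuinely establishes is $\E\{\nu(p_1(\theta^*_\d),p_0(\theta^*_\d))\}\leq\tfrac12\sqrt{\rho_{\mathrm{KL}}(\theta^*_\d)}$, i.e.\ a bound on the adversary's \emph{advantage} over random guessing, $|2\P\{\Psi_{x_i,y_i}(\theta^*_\d)=z_i\}-1|\leq\tfrac12\sqrt{\rho_{\mathrm{KL}}(\theta^*_\d)}$, which is precisely the quantity $\mathrm{Adv}$ used in the paper's experiments. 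So do not try to force the stated form; finish your argument honestly as $\P\{\Psi_{x_i,y_i}(\theta^*_\d)=z_i\}\leq\tfrac12+\tfrac14\sqrt{\rho_{\mathrm{KL}}(\theta^*_\d)}$, or restate the conclusion as the advantage bound above.
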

\end{mdframed}

\begin{proof}
Using Le Cam's inequality~\cite{Yu1997}, we get
\begin{align*}
\inf_{\Psi_{x_i,y_i}}& \P\{\Psi_{x_i,y_i}(\theta^*_\d)\neq z_i\}=1-\nu(p_1(\theta^*_\d),p_0(\theta^*_\d)),
\end{align*}
where $\nu$ is the total variation distance defined as
\begin{align*}
\nu(p_1(\theta^*_\d),p_0(\theta^*_\d)):=\hspace{-.04in}\sup_{\mathcal{A}\in 2^{\Theta_\d}} \hspace{-.04in}|&\mathbb{P}\{\theta^*_\d\in\mathcal{A}|x_i,y_i,z_i=1\}-\mathbb{P}\{\theta^*_\d\in\mathcal{A}|x_i,y_i,z_i=0\}|.
\end{align*}
Hence,
\begin{align*}
\sup_{\Psi_{x_i,y_i}} \P\{\Psi_{x_i,y_i}(\theta^*_\d)=z_i\}
&=\sup_{\Psi_{x_i,y_i}} [1-\P\{\Psi_{x_i,y_i}(\theta^*_\d)\neq z_i\}]\\
&=1-\inf_{\Psi_{x_i,y_i}}\P\{\Psi_{x_i,y_i}(\theta^*_\d)\neq z_i\}\\
&=\nu(p_1(\theta^*_\d),p_0(\theta^*_\d)).
\end{align*}
Note that
\begin{align*}
\E\{\nu(p_1(\theta^*_\d),p_0(\theta^*_\d))\}
\leq & \E\Bigg\{\hspace{-.04in}\sqrt{
	\frac{1}{2} D_{\mathrm{KL}}(p_1(\theta^*_\d)||p_0(\theta^*_\d))
}\Bigg\}\\
\leq &\sqrt{\hspace{-.04in}\frac{1}{2}\E\Bigg\{\hspace{-.04in}
	D_{\mathrm{KL}}(p_1(\theta^*_\d)||p_0(\theta^*_\d))
	\Bigg\}},
\end{align*}	
where the first inequality follows from the Pinsker's inequality~\cite{massart2007concentration} and the second inequality follows from the Jensen's inequality~\cite{cover2012elements} while noting that $x\mapsto\sqrt{x}$ is a concave function. Similarly, we can prove that
\begin{align*}
\E\{\nu(p_1(\theta^*_\d),p_0(\theta^*_\d))\} 
\leq &\sqrt{\hspace{-.04in}\frac{1}{2}\E\Bigg\{\hspace{-.04in}
	D_{\mathrm{KL}}(p_1(\theta^*_\d)||p_0(\theta^*_\d))
	\Bigg\}}.
\end{align*}	
Combining these two inequalities results in 
\begin{align*}
\E\{\nu(p_1(\theta^*_\d),p_0(\theta^*_\d))\}^2 \hspace{-.03in}
\leq &\frac{1}{2}\E\{
	\min\{D_{\mathrm{KL}}(p_0(\theta^*_\d)||p_1(\theta^*_\d)),D_{\mathrm{KL}}(p_1(\theta^*_\d)||p_0(\theta^*_\d))\}
	\}.
\end{align*}
The rest of the proof follows from that 
\begin{align*}
&\min\{D_{\mathrm{KL}}(p_0(\theta^*_\d)||p_1(\theta^*_\d)),D_{\mathrm{KL}}(p_1(\theta^*_\d)||p_0(\theta^*_\d))\}\leq (D_{\mathrm{KL}}(p_0(\theta^*_\d)||p_1(\theta^*_\d))+D_{\mathrm{KL}}(p_1(\theta^*_\d)||p_0(\theta^*_\d)))/2.
\end{align*}
This concludes the proof.
\end{proof}

\begin{remark}[Black-box vs. White-box]
	In both definitions of the membership information leakage, we assume that the adversary has access to the parameters of the trained model  $\theta^*_\d$ (i.e., white-box assumption). This is the strongest assumption for an adversary and the amount of the leaked information reduces if we instead let the adversary query the model (i.e., black-box assumption). In fact, the data processing inequality states that 
	$I(\mathfrak{M}(x_i;\theta^*_\d),y_i;z_i|x_i,y_i)\leq I(\theta^*_\d;z_i|x_i,y_i)=\rho_{\mathrm{MI}}(\theta^*_\d).$
	We are interested in analyzing this framework as it provides an insight against the worst-case adversary and therefore the mitigation techniques extracted from this analysis would also work against weaker adversaries with more restricted access to the model. 
\end{remark}

\begin{remark}[Gaussian Approximation]
Let us present a simple numerical method for computing Kullback--Leibler information leakage using Gaussian approximation. To this aim, we can approximate $p_1(\theta^*_\c)$ and $p_0(\theta^*_\c)$ by Gaussian density functions $\mathcal{N}(\mu_1^{x_i,y_i},\Sigma_1^{x_i,y_i})$ and $\mathcal{N}(\mu_0^{x_i,y_i},\Sigma_0^{x_i,y_i})$, respectively. The parameters $\mu_0^{x_i,y_i}$, $\mu_1^{x_i,y_i}$, $\Sigma_0^{x_i,y_i}$, and $\Sigma_1^{x_i,y_i}$ are extracted by Monte-Carlo simulation with and without $x_i,y_i$. These distributions are often more complex than Gaussian and are approximated with Gaussian distributions for numerical evaluation. When the underlying distributions are close to Gaussian, the errors in such approximations are inversely proportional to the square root of the number of the Monte Carlo scenarios. Using the Gaussian approximation, we get
$
\varrho_1(x_i,y_i)\hspace{-.03in}:=D_{\mathrm{KL}}(p_1(\theta^*_\c)||p_0(\theta^*_\c))*
=0.5(\trace((\Sigma_0^{x_i,y_i})^{-1}\Sigma_0^{x_i,y_i})-p_\theta
\hspace{-.03in}+\hspace{-.03in}(\mu_0^{x_i,y_i}\hspace{-.03in}-\hspace{-.03in}\mu_1^{x_i,y_i})(\Sigma_0^{x_i,y_i})^{-1}(\mu_0^{x_i,y_i}\hspace{-.03in}-\hspace{-.03in}\mu_1^{x_i,y_i})
\hspace{-.03in}+\hspace{-.03in}\ln(\det(\Sigma_0^{x_i,y_i})/\det(\Sigma_1^{x_i,y_i}))).$
We can similarly evaluate the value of  $\varrho_2(x_i,y_i):=D_{\mathrm{KL}}(p_0(\theta^*_\c)||p_1(\theta^*_\c)).$
Then, we can approximate $\rho_{\mathrm{KL}}(\theta^*_\d)$ by computing $\varrho_1(x_i,y_i)$ and $\varrho_2(x_i,y_i)$ for a set of data entries $\J\subseteq\U$ and compute $\rho_{\mathrm{KL}}(\theta^*_\c)\approx \frac{1}{|\J|} \sum_{(x,y)\in\J} (\varrho_1(x,y)+\varrho_2(x,y)).$
This enables us to approximately compute the Kullback–Leibler information leakage in machine learning. Interestingly, this is an approximation method for computing $\rho_{\mathrm{MI}}(\theta^*_\c)$ when approximating $p_1(\theta^*_\c)$ and $p_0(\theta^*_\c)$ by Gaussian density functions~\cite{4218101}. This is because, under Gaussian approximation, $p(\theta^*_\c|x_i,y_i)$ follows a Gaussian mixture. Furthermore, due to data processing inequality, $\rho_{\mathrm{MI}}(\theta^*_\d)\leq \rho_{\mathrm{MI}}(\theta^*_\c)$. Therefore, in Section~\ref{sec:numerical}, we use this approximation for numerically computing $\rho_{\mathrm{MI}}(\theta^*_\d)$.
\end{remark}

\section{What Influences Membership Inference?}
One of the most important factors in machine learning is the size of the training dataset. In what follows, we show that the success of the membership inference attack is inversely proportional to the size of the training dataset. 

\begin{mdframed}[backgroundcolor=black!10,rightline=false,leftline=false,topline=false,bottomline=false,roundcorner=2mm] 
	\vspace{-.1in}
	\begin{theorem} \label{tho:dataset_size_1} Assume that\vspace{-.15in}
	\begin{itemize}
	\item[(A1)] $\Theta_{\c}$ is compact, $\Theta_{\d}\subset\Theta_\c$, and $|\Theta_{\d}|<\infty$;\vspace{-.1in}
	\item[(A2)] $\{(x_i,y_i)\}_{i=1}^N$ is i.i.d. following distribution $\mathcal{P}$;\vspace{-.1in}
	\item[(A3)] $\lambda g(\theta)+\mathbb{E}_{\mathcal{P}}\{\ell(\mathfrak{M}(x;\theta),y)\}$ is continuous and has a unique minimizer;\vspace{-.1in}
	\item[(A4)] $\ell(\mathfrak{M}(x;\theta),y)$ is almost surely Lipschitz continuous with Lipschitz constant $L(x,y)$ on $\Theta_{\c}$ with respect to $\mathcal{P}$ and $\mathbb{E}_{\mathcal{P}}\{L(x,y)\}<\infty$.\vspace{-.1in}
	\end{itemize}
 Then, $\displaystyle \lim_{n,N\rightarrow\infty:n\leq N} \rho_{\mathrm{MI}}(\theta^*_\d)=0$.
	\end{theorem}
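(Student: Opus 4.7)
The plan is to combine the classical consistency of $M$-estimators with the finiteness of $\Theta_\d$ and an elementary entropy bound. Under (A1)--(A4), the trained parameter $\theta^*_\c$ concentrates, as $n\to\infty$, on the unique population-risk minimiser, which is independent of any single record $(x_i,y_i)$; the quantisation $\theta^*_\d$ therefore collapses onto a single value regardless of $z_i$, so the conditional mutual information vanishes. Concretely, (A1)--(A2) (compactness of $\Theta_\c$ and i.i.d.\ sampling) together with the integrable-Lipschitz bound of (A4) deliver a uniform law of large numbers, $\sup_{\theta\in\Theta_\c}|f(\theta,\D)-\bar f(\theta)|\to 0$ in probability, where $\bar f(\theta):=\lambda g(\theta)+\E_{\mathcal{P}}\{\ell(\mathfrak{M}(x;\theta),y)\}$. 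Combined with the continuity-and-uniqueness assumption (A3), a standard argmin-continuous-mapping argument then gives $\theta^*_\c\to\theta^*_\infty$ in probability, where $\theta^*_\infty:=\argmin_{\theta\in\Theta_\c}\bar f(\theta)$.

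Next I would argue that the same limit is reached under the conditional law given $(x_i,y_i,z_i)$, which is the one appearing in $\rho_{\mathrm{MI}}$. For either value of $z_i$, the training set $\D$ consists of (at most) $n$ i.i.d.\ samples from $\mathcal{P}$, possibly augmented by the single fixed record $(x_i,y_i)$; by (A4) and compactness, the resulting empirical objective differs from a plain i.i.d.\ empirical objective by $O(1/n)$ uniformly on $\Theta_\c$, so the uniform law of large numbers still applies and $\theta^*_\c\to\theta^*_\infty$ in probability conditionally, for $\mathcal{P}$-almost every $(x_i,y_i)$. Because $|\Theta_\d|<\infty$, the projection $\Pi_{\Theta_\d}$ is continuous off a closed measure-zero Voronoi skeleton; setting $\theta^\d_\infty:=\Pi_{\Theta_\d}[\theta^*_\infty]$ (any measurable tie-breaking rule works if $\theta^*_\infty$ sits on the skeleton), the continuous-mapping theorem lifts this to $\P\{\theta^*_\d=\theta^\d_\infty\mid x_i,y_i,z_i\}\to 1$, so the finite-valued $\theta^*_\d$ asymptotically concentrates on a single value that does not depend on $z_i$.

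Finally, since $\theta^*_\d\in\Theta_\d$ is finite-valued, one has the uniform majorant $I(\theta^*_\d;z_i\mid x_i=x,y_i=y)\leq H(\theta^*_\d\mid x_i=x,y_i=y)\leq\log|\Theta_\d|$, and the pointwise conditional entropy tends to zero by the concentration just established. Dominated convergence then yields $\rho_{\mathrm{MI}}(\theta^*_\d)=\E\{I(\theta^*_\d;z_i\mid x_i,y_i)\}\to 0$, which is the claim. The main obstacle is the second step: transferring the \emph{unconditional} consistency of $\theta^*_\c$ to the \emph{conditional} laws indexed by $(x_i,y_i,z_i)$, and cleanly handling the exceptional $\mathcal{P}$-null set where the population minimiser falls on a Voronoi boundary of $\Theta_\d$ at which $\Pi_{\Theta_\d}$ is discontinuous; everything else follows from the standard uniform-LLN toolbox.
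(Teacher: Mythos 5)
Your proposal follows essentially the same route as the paper's proof: a uniform law of large numbers from (A1), (A2), (A4), argmin consistency to the unique population minimiser from (A3), projection onto the finite set $\Theta_\d$, and the observation that mutual information with an asymptotically deterministic variable vanishes. You are in fact more careful than the paper on two points it silently skips --- transferring the unconditional consistency of $\theta^*_\c$ to the conditional laws given $(x_i,y_i,z_i)$, and the possible discontinuity of $\Pi_{\Theta_\d}$ when the population minimiser lands on a Voronoi boundary of $\Theta_\d$ --- so the proposal is, if anything, a more complete rendering of the same argument.
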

\end{mdframed}

\begin{proof}
Using Proposition~8.5 in~\cite{Kim2015}, (A1), (A2) and (A4) ensures that $f(\theta;\mathcal{D})$ converges to $\lambda g(\theta)+\linebreak \mathbb{E}_{\mathcal{P}}\{\ell(\mathfrak{M}(x;\theta),y)\}$ almost surely uniformly on $\Theta_{\c}$ as $n\leq N$ tends to infinity. Following this observation in conjunction with (A1), (A3) and Theorem~8.2 in~\cite{Kim2015}, we get that $\theta^*_\c$ converges to $\theta'\in\argmin_{\theta\in\Theta_{\c}} \lambda g(\theta)+\mathbb{E}_{\mathcal{P}}\{\ell(\mathfrak{M}(x;\theta),y)\}$ almost surely.  Therefore, $\theta^*_\d$ converges to $\Pi_{\Theta_\d}[\theta']$ almost surely. Almost sure convergence implies convergence in probability and that in turn implies convergence in distribution~\cite[p.\,38]{klebaner2005introduction}. Now, the continuity of mutual information on finite alphabet implies that $\lim_{n,N\rightarrow\infty:n\leq N} \rho_{\mathrm{MI}}(\theta^*_\d)\linebreak = I(\Pi_{\Theta_\d}[\theta'];z_i|x_i,y_i)=0$ because $\Pi_{\Theta_\d}[\theta']$ is deterministic.
\end{proof}

\begin{remark}[Projection vs Discrete Optimization] Instead of using the projection in~\eqref{eqn:ML_d}, we could rewrite the optimization problem for training the machine learning model in~\eqref{eqn:ML} as a discrete optimization problem over decision set $\Theta_\d$. Doing so, we could still prove the result of Theorem~\ref{tho:dataset_size_1} while relying on the properties of sample average approximation in stochastic discrete problems~\cite{kleywegt2002sample}. We opted for the projection-based approach in~\eqref{eqn:ML_d} as it is closer in spirit to the solutions extracted by computers.
\end{remark}

Theorem~\ref{tho:dataset_size_1} states that the \textbf{amount of the membership information leakage is a decreasing function of the size of the training dataset}. This shows that, by using a larger training dataset, the model is less over-fit to the training dataset and it is therefore harder to distinguish the training data. Increasing the dataset size also helps with over-learning (or memorization), which is another possible factor behind success of membership inference attacks. This is in-line with the observation that over-fitting contributes to success of membership inference attacks~\cite{yeom2018privacy}. The result of Theorem~\ref{tho:dataset_size_1} does not require convexity of the loss function or even its differentiability.  Therefore, it is applicable to different machine learning models ranging from linear regression and support vector machines to neural networks and decision trees. In the next theorem, by focusing on convex smooth machine learning problems, we investigate the effect of other factors, such as regularization, on the success of membership inference attacks.

\begin{mdframed}[backgroundcolor=black!10,rightline=false,leftline=false,topline=false,bottomline=false,roundcorner=2mm] 
	\vspace{-.1in}
	\begin{theorem} \label{tho:2} Assume that\vspace{-.15in}
		\begin{itemize}
			\item[(A1)]$\Theta_{\c}$ is compact, $\Theta_{\d}\subset\Theta_\c$, and $|\Theta_{\d}|<\infty$;\vspace{-.1in}
			\item[(A2)] $\lambda g(\theta)+\mathbb{E}_{\mathcal{P}}\{\ell(\mathfrak{M}(x;\theta),y)\}$ is continuous and finite everywhere;\vspace{-.1in}
			\item[(A3)] $\ell(\mathfrak{M}(x;\theta),y)$ is almost surely Lipschitz continuous with Lipschitz constant $L$ on $\Theta_{\c}$;\vspace{-.1in}
			\item[(A4)] $g(\theta)$ is strictly convex and $\mathbb{E}\{\ell(\mathfrak{M}(x;\theta),y)\}$ is convex.\vspace{-.1in}
		\end{itemize}
		Then, \vspace{-.15in}
		\begin{align}
		\lim_{\lambda\rightarrow\infty} \rho_{\mathrm{MI}}(\theta^*_\d)&=0.
		\end{align}
		Consider a family of fitness functions $\ell(\mathfrak{M}(x;\theta),y)$ parameterized by the the Lipschitz constant $L\in[0,c)$ for some $c>0$, then
		\begin{align}
		\lim_{L\rightarrow 0} \rho_{\mathrm{MI}}(\theta^*_\d)&=0.
		\end{align}
	\end{theorem}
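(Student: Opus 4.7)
The plan is to mirror the template from Theorem~\ref{tho:dataset_size_1}: show that under each of the two limiting regimes, the continuous optimizer $\theta^*_\c$ converges almost surely to a deterministic point that does not depend on the training dataset, hence does not depend on $z_i$; then push this through the projection $\Pi_{\Theta_\d}$ and invoke continuity of mutual information on the finite alphabet $\Theta_\d\times\{0,1\}$ to conclude $\rho_{\mathrm{MI}}(\theta^*_\d)\to 0$.

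For the first limit, I would rescale the objective and write $\theta^*_\c=\argmin_{\theta\in\Theta_\c}\{g(\theta)+\lambda^{-1}|\D|^{-1}\sum_{(x,y)\in\D}\ell(\mathfrak{M}(x;\theta),y)\}$. By (A1) and (A3), the empirical loss term is bounded uniformly on $\Theta_\c$ (a Lipschitz function on a compact set attains its maximum), so the perturbation is $O(\lambda^{-1})$ uniformly in $\theta$ and in the sample path. Hence the rescaled objective converges uniformly on $\Theta_\c$ to $g(\theta)$ as $\lambda\to\infty$. Strict convexity of $g$ (A4) together with compactness of $\Theta_\c$ (A1) gives a unique minimizer $\theta^\dagger:=\argmin_{\theta\in\Theta_\c}g(\theta)$. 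A standard argmin-convergence result (e.g., Theorem~8.2 in~\cite{Kim2015}, already used in the proof of Theorem~\ref{tho:dataset_size_1}) then yields $\theta^*_\c\to\theta^\dagger$ almost surely, and consequently $\theta^*_\d=\Pi_{\Theta_\d}[\theta^*_\c]\to\Pi_{\Theta_\d}[\theta^\dagger]$ almost surely (perturbing the argument of the projection breaks ties only on a measure-zero set; since the limit $\theta^\dagger$ is deterministic we may assume it is not at such a tie, or equivalently appeal to convergence in distribution, which is all we need).

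For the second limit, by (A3), $|\ell(\mathfrak{M}(x;\theta_1),y)-\ell(\mathfrak{M}(x;\theta_2),y)|\leq L\,\|\theta_1-\theta_2\|$ almost surely, so fixing any reference point $\theta_0\in\Theta_\c$ and writing $\ell(\mathfrak{M}(x;\theta),y)=\ell(\mathfrak{M}(x;\theta_0),y)+r(x,y,\theta)$ with $|r|\le L\,\diam(\Theta_\c)$, the residual contributes at most $L\,\diam(\Theta_\c)$ to $f(\theta,\D)$ uniformly in $\theta$ and in the sample, while the $\theta$-independent part shifts the objective by a constant. Hence $f(\theta,\D)$ converges uniformly on $\Theta_\c$ to $\lambda g(\theta)+$const as $L\to 0$, with the same argmin $\theta^\dagger=\argmin_{\theta\in\Theta_\c}g(\theta)$ (unique by (A4)); the same argmin-continuity argument then gives $\theta^*_\c\to\theta^\dagger$ and $\theta^*_\d\to\Pi_{\Theta_\d}[\theta^\dagger]$.

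The final step in both cases is identical. Convergence in distribution of $\theta^*_\d$ to a deterministic point, jointly with the already-present conditioning variables $(x_i,y_i,z_i)$ taking values in a fixed space, makes the joint distribution of $(\theta^*_\d,z_i,x_i,y_i)$ converge to a product in which $\theta^*_\d$ is a Dirac mass, so the conditional mutual information $I(\theta^*_\d;z_i\mid x_i,y_i)$ converges to zero by continuity of mutual information on the finite alphabet $\Theta_\d\times\{0,1\}$. The main obstacle I expect is the pedantic one of justifying the argmin convergence and the projection step cleanly under only (A1)--(A4); in particular, the Lipschitz hypothesis (A3) is stated with a deterministic constant $L$, which is exactly what makes the uniform-in-$\theta$ bounds in both parts work, but care is needed in the second part to interpret the phrase ``family of fitness functions parameterized by $L$'' so that the constant appearing in the Lipschitz bound is the same $L$ whose limit is being taken.
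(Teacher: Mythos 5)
Your proposal is correct and follows the same overall skeleton as the paper's proof: in both limiting regimes, show that $\theta^*_\c$ converges to the deterministic point $\bar{\theta}=\argmin_{\theta\in\Theta_\c}g(\theta)$, push this through $\Pi_{\Theta_\d}$, and conclude $\rho_{\mathrm{MI}}(\theta^*_\d)\to 0$ by continuity of mutual information on the finite alphabet, since a deterministic $\theta^*_\d$ carries no information about $z_i$. Where you differ is in the justification of the argmin-convergence step: the paper simply invokes Berge's Maximum Theorem to assert that $\theta^*_\c$ is a continuous function of $\lambda$ (and of $L$) and then passes to the limit, whereas you rescale the objective by $\lambda^{-1}$, bound the empirical loss uniformly on the compact $\Theta_\c$ via the Lipschitz hypothesis, and deduce uniform convergence of the rescaled objective to $g$ before applying an argmin-convergence result. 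Your route is arguably the more careful one for the first claim, because the Maximum Theorem gives continuity of the argmin over the parameter's domain and does not by itself handle the limit at $\lambda=\infty$; one has to reparameterize (e.g., by $1/\lambda$) exactly as you do. Your treatment of the $L\to 0$ limit likewise makes explicit the uniform $L\,\diam(\Theta_\c)$ bound that underlies the paper's terser observation that the loss becomes $\theta$-independent at $L=0$. The caveat you flag about ties in the projection $\Pi_{\Theta_\d}$ is a genuine subtlety that the paper also glosses over, and your resolution (convergence in distribution suffices, assuming $\bar{\theta}$ is not equidistant between points of $\Theta_\d$) is the right one.
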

\end{mdframed}

\begin{proof} 
	The Maximum Theorem implies that $\theta^*_\c$ is a continuous function of $\lambda$~\cite[p.\,237]{sundaram1996first}. Thus, $\lim_{\lambda\rightarrow\infty} \theta^*_\c=\bar{\theta}:=\argmin_{\theta\in\Theta_{\c}} g(\theta)$ and, as a result, $\lim_{\lambda\rightarrow\infty} \theta^*_\d=\Pi_{\Theta_\d}[\bar{\theta}]$. Thus, 	$\lim_{\lambda\rightarrow\infty}I(\theta^*_\d;z_i|x_i,y_i)= I(\Pi_{\Theta_\d}[\bar{\theta}];z_i|x_i,y_i)\linebreak=0$.  Again, the Maximum Theorem  implies that $\theta^*_\c$ is a continuous function of $L$ over a family of fitness functions $\ell(\mathfrak{M}(x;\theta),y)$ parameterized by the the Lipschitz constant $L$.  For $L=0$, the  fitness function $\ell(\mathfrak{M}(x;\theta),y)$ is independent of $\theta$ because
	$0\leq \|\ell(\mathfrak{M}(x;\theta),y)
	-\ell(\mathfrak{M}(x;\theta'),y)\|\leq L\|\theta-\theta'\|=0$ for all $\theta,\theta'\in\Theta_{\c}$. Thus $\lim_{L\rightarrow 0} \theta^*_\d=\Pi_{\Theta_\d}[\bar{\theta}]$ and, similarly,  $\lim_{L\rightarrow 0}I(\theta^*_\d;z_i|x_i,y_i)=0$.
\end{proof}

Theorem~\ref{tho:2} shows that \textbf{regularization reduces the amount of membership information leakage}. Increasing the importance of the regularization reduces the over-fitting, over-learning, or memorization and is therefore an important tool for combating membership inference attacks. Let us define model sensitivity by
\begin{align*}
S:=\sup_{(x,y)}\sup_{\theta} \left\|\frac{\partial \ell(\mathfrak{M}(x;\theta),y)}{\partial \theta}\right\|.
\end{align*}
Clearly, $L\leq S$. Therefore, Theorem~\ref{tho:2} shows that, \textbf{if the model sensitivity is high, more membership information is potentially leaked}. Therefore, complex models, such as deep neural networks, are more susceptible to membership inference attacks in comparison to simpler models with fewer degrees of freedom. 

\section{Additive Noise for Membership Privacy}
In this section, we explore the use of additive noise, particularly, differential privacy noise, on the amount of the leaked membership information. 

\begin{mdframed}[backgroundcolor=black!10,rightline=false,leftline=false,topline=false,bottomline=false,roundcorner=2mm] 
	\vspace{-.1in}
	\begin{theorem} \label{tho:MI_increase_snr} Assume $w$ is a zero-mean Gaussian variable with unit variance. The mutual membership information leakage is $\rho_{\mathrm{MI}}(\theta^*_\c+tw)$ and Kullback–Leibler membership information leakage $\rho_{\mathrm{KL}}(\theta^*_\c+tw)$ are decreasing functions of $t$. Particularly, $\rho_{\mathrm{MI}}(\theta^*_\c+tw)=\rho_{\mathrm{MI}}(\theta^*_\c)-\mathcal{O}(t)$ and  $\rho_{\mathrm{KL}}(\theta^*_\c+tw)=\rho_{\mathrm{KL}}(\theta^*_\c)-\mathcal{O}(t).$
	\end{theorem}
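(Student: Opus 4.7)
The plan is to derive monotonicity from the data processing inequality applied to the Markov kernel of convolution with independent Gaussian noise, and then to obtain the linear rate by a Taylor expansion of the smoothed densities around $t=0$. For any $0\leq s\leq t$, the infinite divisibility of the Gaussian gives $tw\stackrel{d}{=} sw+\sqrt{t^2-s^2}\,w'$ with $w'$ a standard Gaussian independent of $(\theta^*_\c,x_i,y_i,z_i,w)$. Conditional on $(x_i,y_i)$ this exhibits $z_i\to \theta^*_\c+sw\to \theta^*_\c+tw$ as a Markov chain, so the data processing inequality yields $I(\theta^*_\c+tw;z_i\mid x_i,y_i)\leq I(\theta^*_\c+sw;z_i\mid x_i,y_i)$, i.e., $\rho_{\mathrm{MI}}(\theta^*_\c+tw)$ is non-increasing in $t$. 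The same Markov kernel contracts KL divergence, so $D_{\mathrm{KL}}(p_1(\theta^*_\c+tw)\|p_0(\theta^*_\c+tw))\leq D_{\mathrm{KL}}(p_1(\theta^*_\c+sw)\|p_0(\theta^*_\c+sw))$, and likewise with the roles of $p_0$ and $p_1$ swapped; summing and taking expectation over $(x_i,y_i)$ gives the monotonicity of $\rho_{\mathrm{KL}}(\theta^*_\c+tw)$.

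For the linear rate, I would Taylor-expand the smoothed conditional densities $p_z(\theta^*_\c\mid x_i,y_i)\ast\mathcal{N}(0,t^2 I)$ in $t$ around $t=0$. The expansion is cleanest in the Fourier domain, where the characteristic function picks up a multiplicative factor $\exp(-t^2\|\omega\|^2/2)=1-\tfrac{t^2}{2}\|\omega\|^2+o(t^2)$. Inverse Fourier transforming and using dominated convergence to move the derivative inside the integrals defining $\rho_{\mathrm{MI}}$ and $\rho_{\mathrm{KL}}$ produces a first-order expansion of the smoothed log-densities and hence $|\rho_{\mathrm{MI}}(\theta^*_\c+tw)-\rho_{\mathrm{MI}}(\theta^*_\c)|=\mathcal{O}(t)$, and similarly for $\rho_{\mathrm{KL}}$. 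Combined with the monotonicity above, the sign is forced and the signed expansion $\rho(t)=\rho(0)-\mathcal{O}(t)$ follows. An alternative route for $\rho_{\mathrm{KL}}$ is the de Bruijn identity, which expresses $\frac{\mathrm{d}}{\mathrm{d} t}D_{\mathrm{KL}}(p_1\ast\mathcal{N}(0,t^2 I)\|p_0\ast\mathcal{N}(0,t^2 I))$ in terms of the relative Fisher information between $p_1$ and $p_0$ and makes the rate explicit.

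The main obstacle is the rate step rather than the monotonicity. Monotonicity is a one-line consequence of DPI, but controlling $|\rho(t)-\rho(0)|$ uniformly for small $t$ requires enough regularity of the unsmoothed conditional densities $p_0(\theta^*_\c)$ and $p_1(\theta^*_\c)$ for the Taylor expansion of the logarithm to be integrable and the dominated-convergence interchange to go through. This is not automatic without a mild moment or smoothness hypothesis on $\theta^*_\c$ (or, equivalently, the standing assumption that $\rho_{\mathrm{KL}}(\theta^*_\c)<\infty$); absent such a hypothesis, the cleanest form of the estimate is a local Lipschitz bound around any $t_0>0$, since Gaussian convolution automatically regularises the densities once any noise is present.
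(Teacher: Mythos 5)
Your proposal is correct and reaches the theorem by a genuinely different route for the monotonicity part. The paper's proof invokes the de~Bruijn identity for divergences, writes $\frac{\mathrm{d}}{\mathrm{d}t}D_{\mathrm{KL}}(p_1(\theta^*_\c+tw)\,\|\,p_0(\theta^*_\c+tw))=-\tfrac{1}{2}J(\cdot\,\|\,\cdot)$ with $J$ the (nonnegative) Fisher divergence, and reads off both the decrease and the first-order coefficient $c'=\tfrac{1}{2}J|_{t=0}\geq 0$ from that single identity; the mutual-information claim is then inherited through the mixture decomposition in~\eqref{eqn:MI_reduced_KL}. You instead obtain monotonicity from the Gaussian semigroup property plus the data processing inequality, which is cleaner, requires no differentiability of $t\mapsto\rho(t)$, and treats $\rho_{\mathrm{MI}}$ directly rather than via the mixture decomposition; for the rate, one of your two routes (de~Bruijn) is exactly the paper's. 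One point your Fourier expansion actually makes sharper than the paper: convolving with $\mathcal{N}(0,t^2I)$ multiplies the characteristic function by $1-\tfrac{t^2}{2}\|\omega\|^2+o(t^2)$, so under the regularity needed for the interchange the leading correction is $\mathcal{O}(t^2)$ rather than genuinely linear in $t$ --- this still implies the stated $\rho(t)=\rho(0)-\mathcal{O}(t)$, but it exposes that the natural expansion variable is the variance $t^2$ (the de~Bruijn identity is a derivative with respect to the variance, a point the paper's proof elides when it writes the derivative in $t$). Your closing caveat --- that one needs $\rho_{\mathrm{KL}}(\theta^*_\c)<\infty$ and enough smoothness of $p_0,p_1$ for the Taylor expansion and dominated convergence to be legitimate (otherwise the leakage can even be discontinuous at $t=0$, e.g.\ for atomic $\theta^*_\c$) --- identifies a real gap that the paper's proof passes over silently; neither argument is complete without such a hypothesis, but yours at least names it.
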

\end{mdframed}

\begin{proof} Using de~Bruijn Identity in Terms of Divergences~\cite{valero2017generalization}, we get
	\begin{align*}
	&\frac{\mathrm{d}D_{\mathrm{KL}}(p_1(\theta^*_\c+tw)\| p_0(\theta^*_\c+tw))}{\mathrm{d}t}
=\hspace{-.02in}-\frac{J(p_1(\theta^*_\c+tw)\| p_0(\theta^*_\c+tw))}{2},
	\end{align*}
	where $J(p_1(\theta^*_\c+tw)\| p_0(\theta^*_\c+tw))$ is the Fisher divergence defined as
	\begin{align*}
J(p_1(\theta^*_\c+tw)\| p_0(\theta^*_\c+tw))
=&\int \Bigg[\nabla_t \log\left(\frac{p_1(\theta^*_\c+tw)}{p_0(\theta^*_\c+tw)}\right)\Bigg]^\top
 \Bigg[\nabla_t \log\left(\frac{p_1(\theta^*_\c+tw)}{p_0(\theta^*_\c+tw)}\right)\Bigg] p_0(\theta^*_\c+tw)\mathrm{d}\theta^*_\c.
	\end{align*}
	Note that, by construct, $J$ is semi-positive definite. Note that
	$D_{\mathrm{KL}}(p_1(\theta^*_\c+tw)\| p_0(\theta^*_\c+tw))
	=-c't+\mathcal{O}(t^2),$
	and $D_{\mathrm{KL}}(p_0(\theta^*_\c+tw)\| p_1(\theta^*_\c+tw))=-c''t+\mathcal{O}(t^2),$
	where
	$
	c'=(1/2)J(p_1(\theta^*_\c+tw)\| p_0(\theta^*_\c+tw))|_{t=0}
	\geq 0$ and $c''=(1/2)J(p_0(\theta^*_\c+tw)\| p_1(\theta^*_\c+tw))|_{t=0}|_{t=0}\geq 0.$
	Selecting $c_{\mathrm{KL}}=c'+c''$ concludes the proof for the mutual information by proving that $\rho_{\mathrm{KL}}(\theta^*_\c+tw)=\rho_{\mathrm{KL}}(\theta^*_\c)-c_{\mathrm{KL}}t+\mathcal{O}(t^2)$. In light of~\eqref{eqn:MI_reduced_KL}, the proof for the mutual membership information leakage follows the same line of reasoning. 
\end{proof}

Theorem~\ref{tho:MI_increase_snr} proves that \textbf{the amount of  membership information leakage is increased by reducing the amount of the additive noise}. This is why differential privacy works as a successful defence strategy in membership inference~\cite{10shokri2017membership}, albeit if its privacy budget is set small enough. In what follows, we focus on the effect of differential privacy noise on the membership information leakage.

\begin{definition}[Differential Privacy]
Mechanism $\theta^*_\c+w$ with additive noise $w$ is $(\epsilon,\delta)$-differentially private if, for all Lebesgue-measurable sets $\mathcal{W}\subseteq\mathbb{R}^{p_\theta}$,
\begin{align*}
\P\{\theta^*_\c+w\in&\mathcal{W}|(z_i)_1^N\}\leq \exp(\epsilon)\P\{\theta^*_\c+w\in\mathcal{W}|(z'_i)_1^N\}+\delta ,
\end{align*}
where $(z_i)_1^N$ and $(z'_i)_1^N$ are any two vectors in $\{0,1\}^N$ such that $\sum_{i=1}^N z_i=n$, $\sum_{i=1}^N z'_i=n$, and there exists at most one index $j$ for which $z_j\neq z'_j$. 
\end{definition}

It has been proved that we can ensure $(\epsilon,\delta)$-differential privacy with Gaussian noise. This is recited in the following proposition. 

\begin{proposition} \label{prop:differential_privacy}
Assume that $\Delta \theta^*_\c>0$ is such that $\|\theta^*_\c((z_i)_1^N)-\theta^*_\c((z'_i)_1^N)\|_2\leq \Delta \theta^*_\c$, 
where $(z_i)_1^N$ and $(z'_i)_1^N$ are any two vectors in $\{0,1\}^N$ such that $\sum_{i=1}^N z_i=n$, $\sum_{i=1}^N z'_i=n$, and there exists at most one index $j$ for which $z_j\neq z'_j$.  Then, the mechanism $\theta^*_\c+w$ is $(\epsilon,\delta)$-differentially private if $w$ is a zero-mean Gaussian noise with standard deviation $\sigma=\sqrt{2\log(1.25/\delta)}\Delta \theta^*_\c/\epsilon$.
\end{proposition}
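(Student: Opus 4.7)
The statement is the classical Gaussian mechanism of~\cite[Theorem~A.1]{dwork2014algorithmic}, which the abstract already invokes; my plan is to restate the key steps rather than derive a new bound.

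First I would fix any two neighbouring index vectors $(z_i)_1^N$ and $(z'_i)_1^N$ and set $u:=\theta^*_\c((z_i)_1^N)-\theta^*_\c((z'_i)_1^N)$, so that $\|u\|_2\leq\Delta\theta^*_\c$. Writing out the two Gaussian densities of $\theta^*_\c+w$ conditioned on the neighbouring datasets, the normalising constants cancel, and the privacy-loss random variable
\[
L(y):=\log\frac{p(y\mid(z_i)_1^N)}{p(y\mid(z'_i)_1^N)}
=\frac{\|y-\theta^*_\c((z'_i)_1^N)\|^2-\|y-\theta^*_\c((z_i)_1^N)\|^2}{2\sigma^2}
\]
reduces, after substituting $y=\theta^*_\c((z_i)_1^N)+w$, to $L=(u^\top w)/\sigma^2+\|u\|^2/(2\sigma^2)$. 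Hence $L$ is a univariate Gaussian with mean $\|u\|^2/(2\sigma^2)$ and variance at most $(\Delta\theta^*_\c)^2/\sigma^2$.

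Second, I would invoke the standard reduction: the inequality $\P\{\theta^*_\c+w\in\mathcal{W}\mid(z_i)_1^N\}\leq e^{\epsilon}\P\{\theta^*_\c+w\in\mathcal{W}\mid(z'_i)_1^N\}+\delta$ follows from $\P\{L>\epsilon\mid(z_i)_1^N\}\leq\delta$, by splitting $\mathcal{W}$ into its intersections with $\{L\leq\epsilon\}$ and $\{L>\epsilon\}$ and bounding each term separately (the first via the defining inequality for $L$ and the second via the tail assumption). It therefore suffices to verify, using the Gaussian distribution of $L$ derived above, that the tail inequality $\P\{L>\epsilon\}\leq\delta$ holds when $\sigma=\sqrt{2\log(1.25/\delta)}\,\Delta\theta^*_\c/\epsilon$.

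The main obstacle is pinning down the specific constant $\sqrt{2\log(1.25/\delta)}$ rather than $\sqrt{2\log(1/\delta)}$. A crude bound $\P\{\mathcal{N}(0,1)>c\}\leq e^{-c^2/2}$ would only give a worse constant; one instead uses the Mills-ratio refinement $\P\{\mathcal{N}(0,1)>c\}\leq e^{-c^2/2}/(c\sqrt{2\pi})$ together with a short case analysis (essentially on whether $\epsilon\leq 1$) to absorb the $c\sqrt{2\pi}$ prefactor into the replacement $1\to 1.25$ inside the logarithm. This numerical manipulation is exactly the one carried out in~\cite[Appendix~A]{dwork2014algorithmic}, which I would cite rather than reproduce in full.
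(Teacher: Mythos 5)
Your proposal is correct and takes essentially the same route as the paper, whose entire proof is a citation of the Gaussian mechanism in~\cite[Theorem~A.1]{dwork2014algorithmic}; you simply unpack the standard argument behind that citation (privacy-loss random variable, reduction to a Gaussian tail bound, and the Mills-ratio case analysis yielding the $1.25$ constant). The only caveat worth noting is that the cited theorem, and hence this argument, formally requires $\epsilon<1$, a restriction the proposition itself omits.
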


\begin{proof}
The proof immediately follows from using the Gaussian mechanism for ensuring differential privacy; see, e.g.,~\cite[Theorem~A.1]{dwork2014algorithmic}.
\end{proof}

\begin{mdframed}[backgroundcolor=black!10,rightline=false,leftline=false,topline=false,bottomline=false,roundcorner=2mm] 
	\vspace{-.1in}
\begin{corollary} \label{cor:dp} Assume that $w$ is selected to ensure $(\epsilon,\delta)$-differential privacy based on Proposition~\ref{prop:differential_privacy}. Then, $\rho_{\mathrm{MI}}(\theta^*_\c+w)=\rho_{\mathrm{MI}}(\theta^*_\c)-\mathcal{O}(\log^{1/2}(\delta^{-1})\epsilon^{-1})$ and $\rho_{\mathrm{KL}}(\theta^*_\c+w)=\rho_{\mathrm{KL}}(\theta^*_\c)-\mathcal{O}(\log^{1/2}(\delta^{-1})\epsilon^{-1})$.
\end{corollary}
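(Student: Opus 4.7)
The plan is to simply combine the first-order expansion established in Theorem~\ref{tho:MI_increase_snr} with the Gaussian standard deviation prescribed by Proposition~\ref{prop:differential_privacy}, then read off the resulting asymptotic rate in the privacy parameters $(\epsilon,\delta)$. Concretely, I would start by letting $w$ be a zero-mean unit-variance Gaussian and set $t:=\sigma=\sqrt{2\log(1.25/\delta)}\,\Delta\theta^*_\c/\epsilon$, so that $tw$ is exactly the Gaussian additive noise that Proposition~\ref{prop:differential_privacy} certifies to be $(\epsilon,\delta)$-differentially private.

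Next I would invoke Theorem~\ref{tho:MI_increase_snr}, which gives expansions of the form $\rho_{\mathrm{MI}}(\theta^*_\c+tw)=\rho_{\mathrm{MI}}(\theta^*_\c)-c_{\mathrm{MI}}\,t+\mathcal{O}(t^2)$ and $\rho_{\mathrm{KL}}(\theta^*_\c+tw)=\rho_{\mathrm{KL}}(\theta^*_\c)-c_{\mathrm{KL}}\,t+\mathcal{O}(t^2)$, with $c_{\mathrm{MI}},c_{\mathrm{KL}}\geq 0$ defined via the Fisher divergences appearing in that proof. Substituting the value of $t$ prescribed above and absorbing the fixed constants $\sqrt{2}$, $\sqrt{\log(1.25)}$, $\Delta\theta^*_\c$, $c_{\mathrm{MI}}$ and $c_{\mathrm{KL}}$ into the big-$\mathcal{O}$ notation yields
\begin{align*}
\rho_{\mathrm{MI}}(\theta^*_\c+w)=\rho_{\mathrm{MI}}(\theta^*_\c)-\mathcal{O}\bigl(\log^{1/2}(\delta^{-1})\epsilon^{-1}\bigr),
\end{align*}
and identically for $\rho_{\mathrm{KL}}$, which is exactly the claimed statement.

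The main obstacle — really the only non-cosmetic point — is justifying that $\Delta\theta^*_\c$ can be treated as a finite constant independent of $(\epsilon,\delta)$, and delineating the regime in which the first-order expansion of Theorem~\ref{tho:MI_increase_snr} dominates its $\mathcal{O}(t^2)$ remainder. Finiteness of $\Delta\theta^*_\c$ follows from compactness of $\Theta_\c$ in assumption (A1) of Theorem~\ref{tho:2}, which bounds the worst-case change in the minimizer when a single indicator $z_j$ is flipped. The Taylor-expansion validity is more delicate when $\epsilon,\delta$ are driven very small, since then $t=\sigma$ is large; however, the intended reading of the corollary is an asymptotic identification of the rate at which membership leakage is suppressed as the privacy budget tightens, and the non-negativity of $\rho_{\mathrm{MI}},\rho_{\mathrm{KL}}$ ensures the decrease saturates rather than diverges. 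Making this precise across the full parameter range is the one step that would need more care in a full write-up.
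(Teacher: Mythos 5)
Your proposal is correct and matches the paper's (implicit) argument exactly: the corollary is intended as an immediate consequence of substituting the Gaussian-mechanism standard deviation $\sigma=\sqrt{2\log(1.25/\delta)}\,\Delta\theta^*_\c/\epsilon$ from Proposition~\ref{prop:differential_privacy} for $t$ in the first-order expansions of Theorem~\ref{tho:MI_increase_snr}, which is precisely what you do. Your added remarks on the finiteness of $\Delta\theta^*_\c$ and the regime of validity of the $\mathcal{O}(t)$ expansion go slightly beyond what the paper addresses, but they do not change the route.
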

\end{mdframed}

Again, noting the data processing inequality, we get $\rho_{\mathrm{MI}}(\theta^*_\d)\leq \rho_{\mathrm{MI}}(\theta^*_\c)$. Therefore, we expect $\rho_{\mathrm{MI}}(\theta^*_\d)$ to follow the trend as in Corollary~\ref{cor:dp}.

\section{Experimental Validation} \label{sec:numerical}
In this section, we demonstrate the results of the paper numerically using a practical dataset. 
\begin{figure}[t]
	\centering 
	\begin{tikzpicture}
	\node[] at (0,0) {\includegraphics[width=.45\columnwidth]{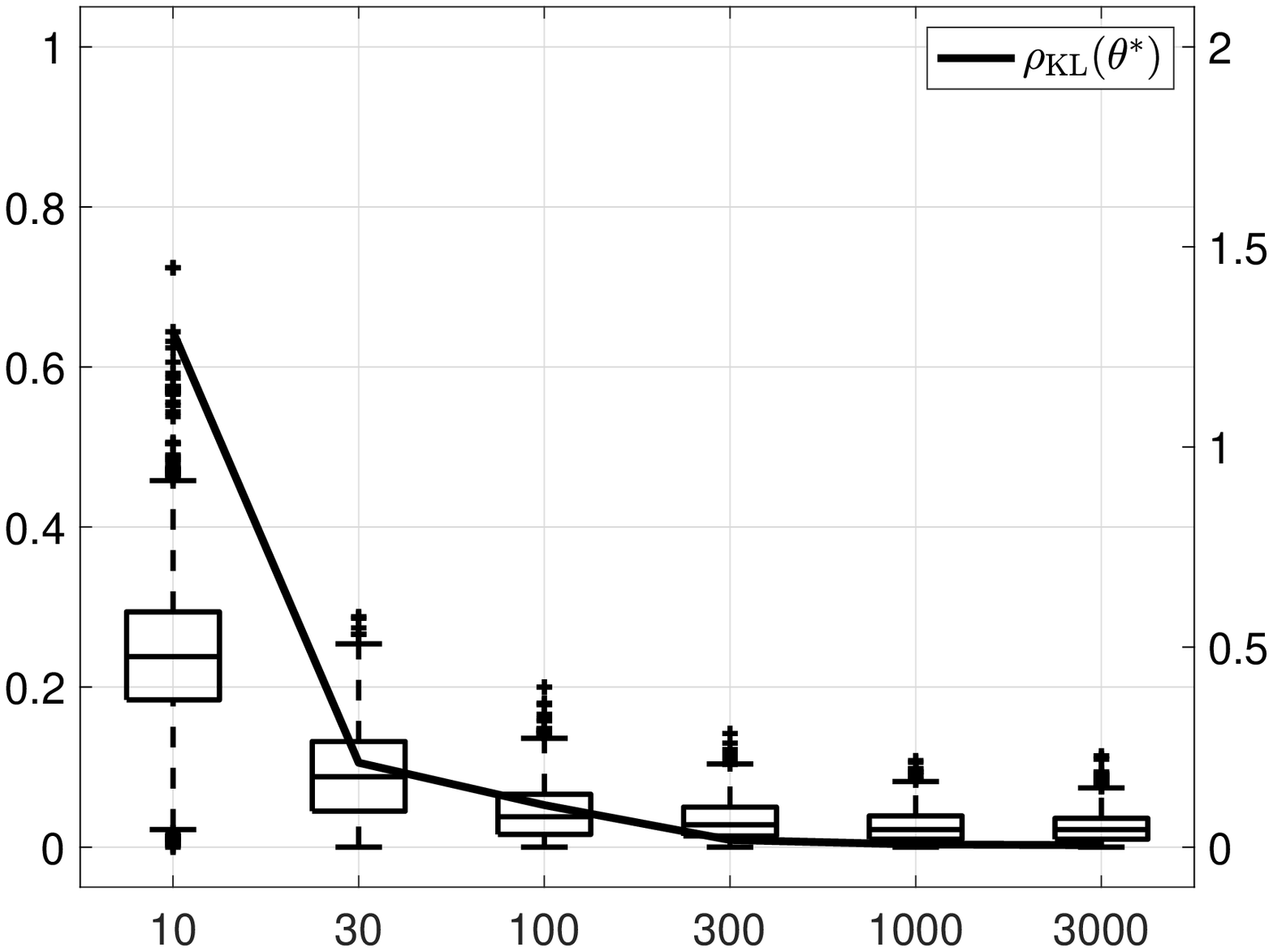}};
	\node[] at (0,-2.9) {$n$};
	\node[rotate=90] at (-3.7,0) {$\mathrm{Adv}$};
	\node[rotate=90] at (+3.7,0) {$\rho_{\mathrm{KL}}(\theta^*_\c)$};
	\end{tikzpicture}
	\vspace{-.1in}
	\caption{\label{fig:1} The relationship between adversary's advantage in membership attack and the size of the training dataset for linear regression ($\lambda=0$ and $p_x=5$). 
	}
\vspace{.1in}
	\begin{tikzpicture}
	\node[] at (0,0) {\includegraphics[width=.45\columnwidth]{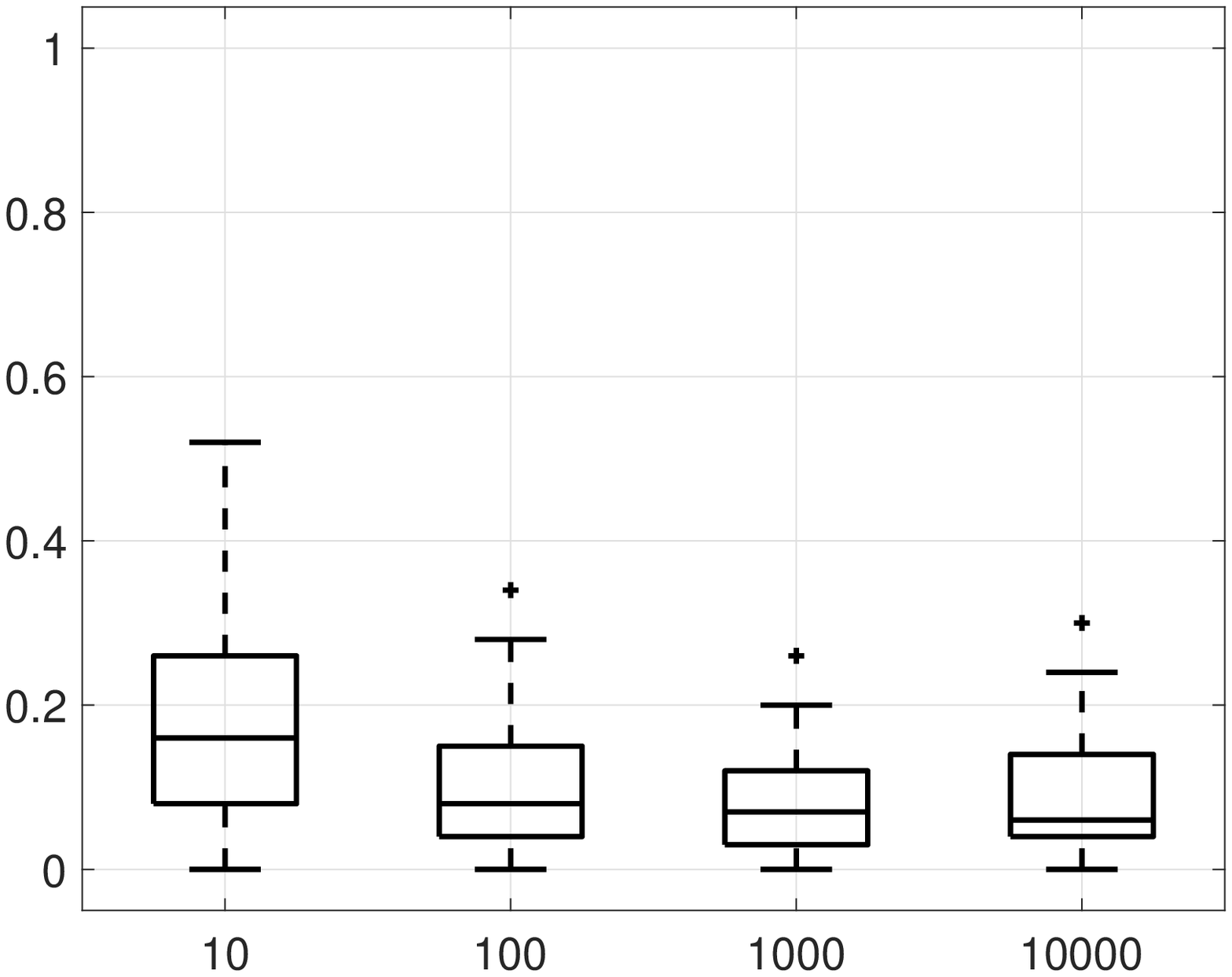}};
	\node[] at (0,-2.9) {$n$};
	\node[rotate=90] at (-3.7,0) {$\mathrm{Adv}$};
	\end{tikzpicture}
	\vspace{-.1in}
	\caption{\label{fig:11} The relationship between adversary's advantage in membership attack versus the size of the training dataset for neural network ($\lambda=0$ and $p_x=5$). 
	}
\end{figure}

\begin{figure}[t]
	\centering
	\begin{tikzpicture}
	\node[] at (0,0) {\includegraphics[width=.45\columnwidth]{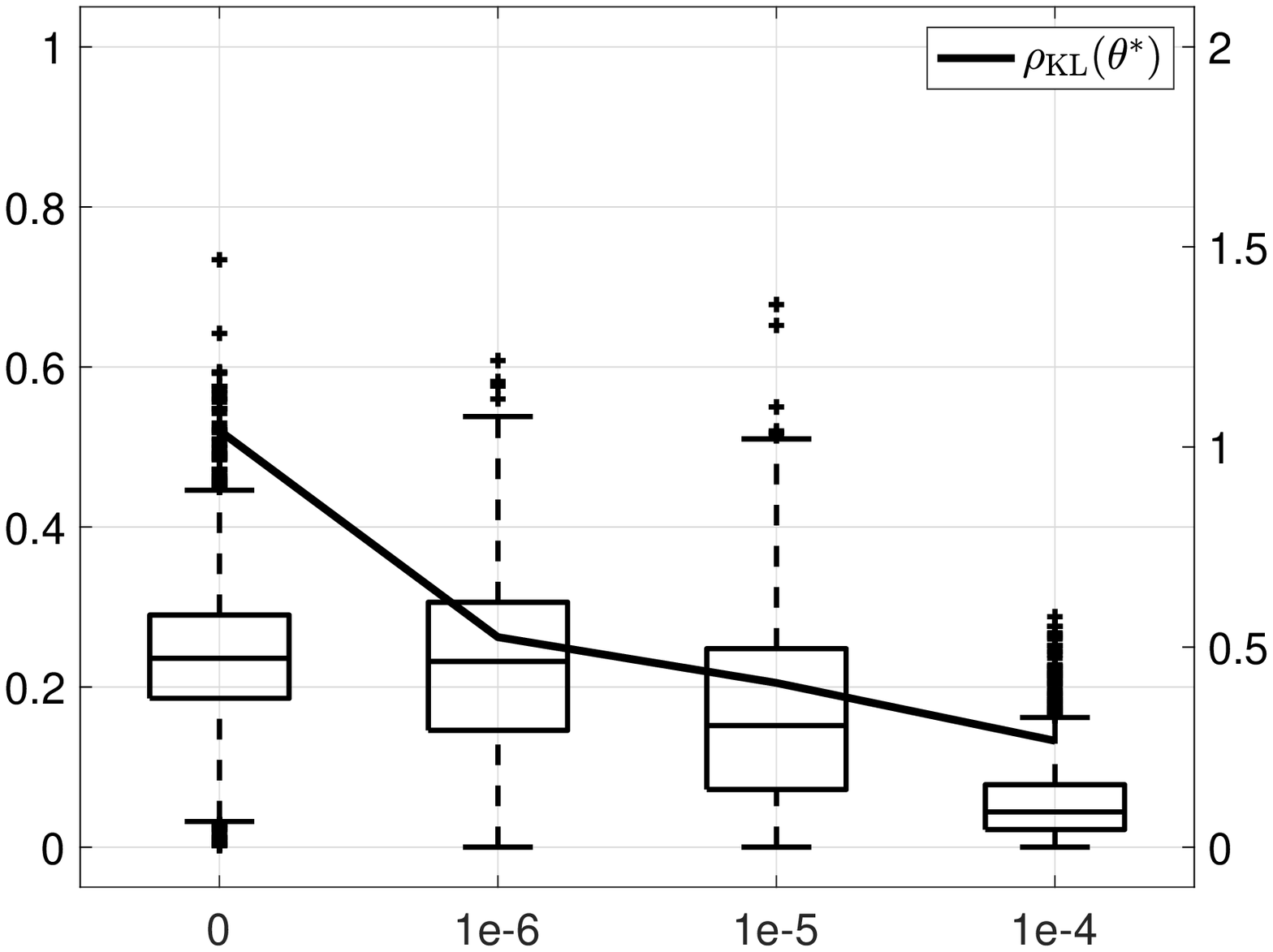}};
	\node[] at (0,-2.9) {$\lambda$};
	\node[rotate=90] at (-3.7,0) {$\mathrm{Adv}$};
	\node[rotate=90] at (+3.7,0) {$\rho_{\mathrm{KL}}(\theta^*_\c)$};	
	\end{tikzpicture}
	\vspace{-.1in}
	\caption{\label{fig:3} The relationship between adversary's advantage in membership attack and regularization weight $\lambda$ for linear regression ($n=10$ and $p_x=5$).
	}
\vspace{.1in}
	\begin{tikzpicture}
	\node[] at (0,0) {\includegraphics[width=.45\columnwidth]{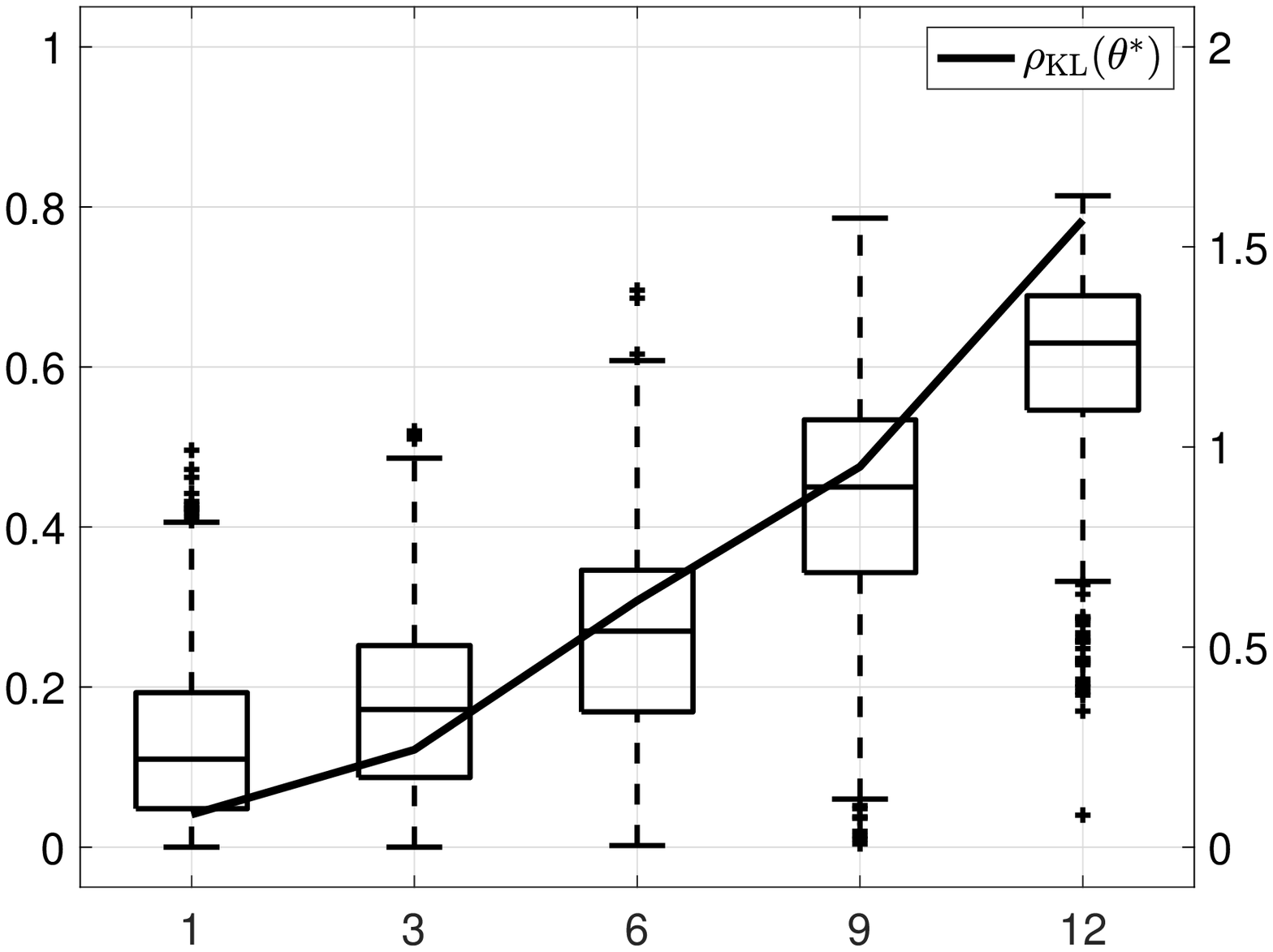}};
	\node[] at (0,-2.9) {$p_x$};
	\node[rotate=90] at (-3.7,0) {$\mathrm{Adv}$};
	\node[rotate=90] at (+3.7,0) {$\rho_{\mathrm{KL}}(\theta^*_\c)$};	
	\end{tikzpicture}
	\vspace{-.1in}
	\caption{\label{fig:p} The relationship between adversary's advantage in membership attack versus number of features $p_x$ for linear regression ($n=10$ and $\lambda=10^{-6}$).
	}
\end{figure}

\begin{figure}[t]
	\centering
	\begin{tikzpicture}
	\node[] at (0,0) {\includegraphics[width=.45\columnwidth]{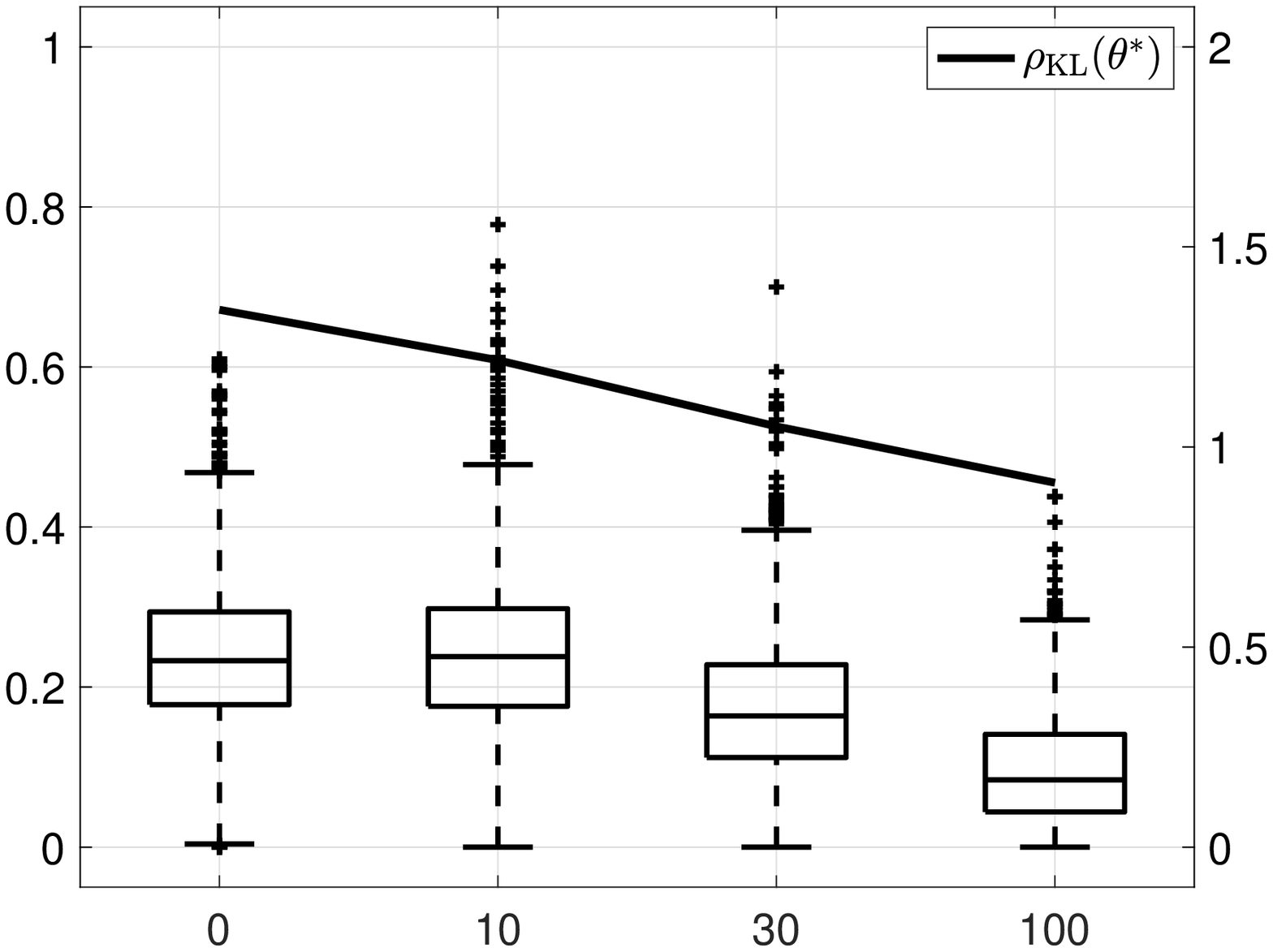}};
	\node[] at (0,-2.9) {$\sigma$};
	\node[rotate=90] at (-3.7,0) {$\mathrm{Adv}$};
	\node[rotate=90] at (+3.7,0) {$\rho_{\mathrm{KL}}(\theta^*_\c)$};	
	\end{tikzpicture}
\vspace{-.1in}
\caption{\label{fig:4} The relationship between adversary's advantage in membership attack and the standard deviation of additive noise $\sigma$ for linear regression ($n=30$, $\lambda=0$,  and $p_x=5$). 
}
\end{figure}

\subsection{Dataset Description}
We use the Adult Dataset available from the UCI Machine Learning Repository~\cite{Dua2019}. This dataset was first used in~\cite{kohavi1996scaling}. The Adult Dataset contains 14 individual attributes, such as age, race, occupation, and relationship status, as inputs and income level (i.e., above or below \$50K per annum) as output. The dataset contains $N=48,842$ instances extracted from the  1994 Census database. We translate all categorical attributes and outputs to integers. We perform feature selection using the Principal Component Analysis (PCA) to select the top $p_x$ important features. This greatly improves the numerical stability of the underlying machine learning algorithms. In what follows, we select $p_x=5$ except for one example in which we vary $p_x$ to study its effect on the success of membership inference and membership information leakage. 

\subsection{Experiment Setup}
We use linear regression to demonstrate all the results of the paper, namely, Theorems~\ref{tho:dataset_size_1}--\ref{tho:MI_increase_snr}. However, noting that the results of Theorem~\ref{tho:dataset_size_1} also hold for non-convex learning problems, we demonstrate these results also for neural networks with five hidden layers with fifty neurons in each layer and hyperbolic tangent sigmoid activation function. We employ the quadratic regularization function  $g(\theta)=\theta^\top \theta$ if one is used.

For membership inference, we use the threshold-based adversary in~\cite{yeom2018privacy}. To assess the effectiveness of the membership attacks, we also use the membership experiment from~\cite{yeom2018privacy}. Let us describe this experiment briefly. For any $n$, we select $(z_i)_{i=1}^N\in\{0,1\}^N$ uniformly at random such that $\sum_{i=1}^N z_i=n$. We train a model $\theta^*_\d$ based on the training dataset $\mathcal{D}=\{(x_i,y_i)\}_{i:z_i=1}$. We select $b$ with equal probability from $\{0,1\}$. Then, we select a single record from the training dataset $\mathcal{D}=\{(x_i,y_i)\}_{i:z_i=1}$ if $b=1$ or from the remaining data the training dataset $\mathcal{U}\setminus\mathcal{D}=\{(x_i,y_i)\}_{i:z_i=0}$ if $b=0$. We transmit the selected record to the adversary. The adversary estimates the realization of random variable $b$, denoted by $\hat{b}\in\{0,1\}$, based on the selected record and trained model. The adversary's advantage (in comparison to randomly selecting an estimate) is given by
$\mathrm{Adv}:=|2\P\{\hat{b}=b\}-1|. $
We investigate the relationship between this advantage and other factors, such as membership information leakage, training dataset size, regularization, and additive privacy-preserving noise, in the remainder of this section. 

\subsection{Experimental Results}
We first show that the membership information leakage gets smaller by increasing the size of the training dataset; thus validating the prediction of Theorem~\ref{tho:dataset_size_1}. Figure~\ref{fig:1} illustrates the adversary's advantage in membership attack (left axis) and the membership information leakage (right axis) versus the size of the training dataset in the case of linear regression. The box plot shows the adversary's advantage and the solid line shows the membership information leakage. As expected from the theorem, the adversary's advantage in membership attack and the membership information leakage decrease rapidly by increasing the size of the training dataset. Noting that the results of Theorem~\ref{tho:dataset_size_1} also hold for non-convex learning problems, we demonstrate this results also for neural networks. Figure~\ref{fig:11} illustrates the relationship between adversary's advantage in membership attack and the size of the training dataset for a neural network with five hidden layers with fifty neurons in each layer and hyperbolic tangent sigmoid activation function. The same trend is also visible in this case. 

Now, we proceed to validate the prediction of Theorem~\ref{tho:2} regarding the effect of regularization. Figure~\ref{fig:3} shows the adversary's advantage in membership attack (left axis) and the membership information leakage (right axis) versus the regularization weight $\lambda$ for linear regression. Evidently, by increasing the weight of regularization, the adversary's advantage in membership attack and the membership information leakage both decrease. This is intuitive as we expect, by increasing $\lambda$, the trained model to become closer to $\argmin_{\theta\in\Theta_{\c}} g(\theta)$ which is data independent.

An important factor in membership inference success is the number of features.  Figure~\ref{fig:p} illustrates the adversary's advantage in membership attack (left axis) and the membership information leakage (right axis) versus the number of the features extracted from the PCA $p_x$ for linear regression. Now, by increasing the the number of the features, the adversary's advantage in membership attack and the membership information leakage both increase (as more features potentially makes the data entries more unique and their effect of the trained model more pronounced). 

Finally, we investigate the effect of the additive noise on membership inference. Figure~\ref{fig:4} shows the adversary's advantage in membership attack (left axis) and the membership information leakage (right axis) versus the standard deviation of the additive noise to the model parameters for linear regression. By increasing  the magnitude of the noise, the adversary's advantage in membership attack and the membership information leakage both decrease. This is in-line with our theoretical observations from Theorem~\ref{tho:MI_increase_snr}. 

\section{Conclusions and Future Work}
We used mutual information and Kullback--Leibler divergence to develop measures for membership information leakage in machine learning.  We showed that the amount of the membership information leakage is a decreasing function of the training dataset size, the regularization weight, the sensitivity of machine learning model. We also investigated the effect of privacy-preserving additive noise on membership information leakage and the success of membership inference attacks. Future work can focus on further experimental validation of the relationship between the membership information leakage and the success of membership inference attacks for more general machine learning models, e.g., deep neural networks. Another interesting avenue for future research is to use the developed measures of membership information leakage as a regularizer for training machine learning models in order to effectively combat membership inference attacks.

\bibliographystyle{ieeetr}
\bibliography{citation}
\end{document}